\newtheorem{definition}{Definition}
\newtheorem*{remark}{Remark}
\newcommand{\setone}{\mathcal{A}}
\newcommand{\settwo}{\mathcal{B}}
\newtheorem{theorem}{Theorem}
  \providecommand\BibTeX{{%
    \normalfont B\kern-0.5em{\scshape i\kern-0.25em b}\kern-0.8em\TeX}}}
\begin{document}

\fancyhead{}

%%
%% The "title" command has an optional parameter,
%% allowing the author to define a "short title" to be used in page headers.
\title{Practical Compositional Fairness: Understanding Fairness in Multi-Component Recommender Systems}

%%
%% The "author" command and its associated commands are used to define
%% the authors and their affiliations.
%% Of note is the shared affiliation of the first two authors, and the
%% "authornote" and "authornotemark" commands
%% used to denote shared contribution to the research.
\author{Xuezhi Wang, Nithum Thain, Anu Sinha, Flavien Prost, Ed H. Chi, Jilin Chen, Alex Beutel}
\affiliation{Google Brain}
\email{{xuezhiw, nthain, aradhanas, fprost, edchi, jilinc, alexbeutel}@google.com}

%%
%% By default, the full list of authors will be used in the page
%% headers. Often, this list is too long, and will overlap
%% other information printed in the page headers. This command allows
%% the author to define a more concise list
%% of authors' names for this purpose.
\renewcommand{\shortauthors}{Wang et al.}

%%
%% The abstract is a short summary of the work to be presented in the
%% article.
\begin{abstract}
  How can we build recommender systems to take into account fairness?  Real-world recommender systems are often composed of multiple models, built by multiple teams.  
However, most research on fairness focuses on improving fairness in a single model.  Further, recent research on classification fairness has shown that combining multiple ``fair'' classifiers can still result in an ``unfair'' classification system.  This presents a significant challenge: how do we understand and improve fairness in recommender systems composed of multiple components?
  
  In this paper, we study the compositionality of recommender fairness.
  We consider two recently proposed fairness ranking metrics: equality of exposure and pairwise ranking accuracy.
  While we show that fairness in recommendation \emph{is not} guaranteed to compose, we provide theory for a set of conditions under which fairness of individual models \emph{does} compose.
  We then present an analytical framework for both understanding whether a real system's signals can achieve compositional fairness, and improving which component would have the greatest impact on the fairness of the overall system.
  In addition to the theoretical results, we find on multiple datasets---including a large-scale real-world recommender system---that the overall system's end-to-end fairness is largely achievable by improving fairness in individual components.  
\end{abstract}

%%
%% The code below is generated by the tool at http://dl.acm.org/ccs.cfm.
%% Please copy and paste the code instead of the example below.
%%
\begin{CCSXML}
<ccs2012>
<concept>
<concept_id>10002951.10003317.10003347.10003350</concept_id>
<concept_desc>Information systems~Recommender systems</concept_desc>
<concept_significance>500</concept_significance>
</concept>
<concept>
<concept_id>10002951.10003260.10003261.10003267</concept_id>
<concept_desc>Information systems~Content ranking</concept_desc>
<concept_significance>500</concept_significance>
</concept>
<concept>
<concept_id>10002951.10003260.10003261.10003270</concept_id>
<concept_desc>Information systems~Social recommendation</concept_desc>
<concept_significance>500</concept_significance>
</concept>
</ccs2012>
\end{CCSXML}

\ccsdesc[500]{Information systems~Recommender systems}
\ccsdesc[500]{Information systems~Content ranking}
\ccsdesc[500]{Information systems~Social recommendation}

%%
%% Keywords. The author(s) should pick words that accurately describe
%% the work being presented. Separate the keywords with commas.
\keywords{compositional fairness; recommender systems; ranking fairness}

\copyrightyear{2021}
\acmYear{2021}

\acmConference[WSDM '21]{Proceedings of the Fourteenth ACM International Conference on Web Search and Data Mining}{March 8--12, 2021}{Virtual Event, Israel}
\acmBooktitle{Proceedings of the Fourteenth ACM International Conference on Web Search and Data Mining (WSDM '21), March 8--12, 2021, Virtual Event, Israel}\acmDOI{10.1145/3437963.3441732}
\acmISBN{978-1-4503-8297-7/21/03}

%%
%% This command processes the author and affiliation and title
%% information and builds the first part of the formatted document.
\maketitle

\section{Introduction}
As recommender systems become more central in our lives, the importance of their fairness has become increasingly clear.
Over the past few years, the research community has developed a variety of metrics for evaluating the impact of recommender systems on different groups of stakeholders, such as if relevant items get equal exposure, if demographic groups of item producers rank well conditioned on user interest, and if items from a diverse set of item producers or topics are shown \cite{DBLP:conf/kdd/SinghJ18, biega2018equity, DBLP:conf/kdd/BeutelCDQWWHZHC19, diaz2020evaluating,zehlike2017fa,celis2018ranking,burke2017multisided,yao2017beyond}.  These are important values to consider in the \textit{responsible} design of a recommender system, but how to build recommender systems to meet these goals is still difficult.

One fundamental challenge is that real-world recommendation products are often composed of many models, designed to capture different aspects of the user experience and sometimes even built and maintained by distinct teams \cite{survey05, hybrid02,he2014practical, wang2011cascade}.  For example, one team may be responsible for predicting implicit feedback signals like clicks \cite{he2014practical}, another may be responsible for predicting explicit feedback like ratings or surveys \cite{bennett2007netflix,barbieri2016improving,yi2014beyond,garcia2018understanding}, and another may be responsible for predicting notions of item quality \cite{potthast2016clickbait,kumar2018false}; these different predictions are typically then combined, often multiplicatively, to produce a final ranking \cite{zhe2019, xiao2018,youtube2010,claypool1999combing}.  

This multi-component design pattern poses a challenge for recommendation fairness: how should each team build their models to make the end-ranking seen by the user meet fairness goals?  Nearly all of the research on training-time improvements for fairness has assumed that the ``system'' that we are improving is a single differentiable model \cite{DBLP:conf/icml/AgarwalBD0W18,beutel2019putting, DBLP:conf/kdd/BeutelCDQWWHZHC19,singh2019policy,narasimhan2019pairwise,pmlr-v54-zafar17a}\footnote{Note, we focus on training-time improvements because serving-time changes typically require knowing demographics at serving-time, which is often not possible \cite{beutel2019putting}.}.  As a result, an obvious guess is to train each model in the recommender system to itself meet a ranking fairness goal.  Unfortunately, recent research on fairness in classification showed that even if two classifiers are ``fair,'' a combination of their predictions can still be ``unfair'' \cite{dwork2018fairness,cynthia2018}. 
Known as the compositional fairness problem, this presents a significant open question for recommenders: does recommendation fairness compose in our modular recommender systems?

In this paper, we study the compositional recommender fairness problem both from a worst-case lens and a data-driven average-case lens.  
First, we demonstrate that recommendation fairness, unfortunately, \emph{is not guaranteed} to compose, which suggests a significant obstacle for building  real-world recommender systems for fairness.  

\vspace{-0.05in}
\paragraph{A motivating example.} 
To make the problem concrete, consider the hypothetical example of a recommendation system for books (e.g., \cite{fairness_book18}) that wants to rank items by expected user satisfaction.
It can be built with two components: (1) $pCTR = P({\rm click})$: one that predicts click-through rate on a book; (2) $pRating = E[{\rm rating} | {\rm click}]$: one that predicts the star rating given a book was clicked, akin to \cite{xiao2018}. With these components, items can then be ranked by:
\vspace{-0.03in}
$$
E[{\rm rating}] = P({\rm click}) \times E[{\rm rating} | {\rm click}] = pCTR \times pRating.
$$
Let the fairness goal be demographic parity in \textit{ranking exposure} \cite{DBLP:conf/kdd/SinghJ18}. 
That is, the ranking of the composite score
should not systematically differ between \textit{white} and \textit{non-white} authors.
Each component could be made ``fair'' with respect to author demographics through recent mitigation methods \cite{DBLP:conf/kdd/BeutelCDQWWHZHC19,singh2019policy,narasimhan2019pairwise}. What does this mean for the demographic parity on the ranked composite scores? It may feel intuitive to assume that if each component gives equal exposure to each group, the overall system should as well.
Here we show through an example that this is not the case. Assume we have $4$ books with author demographics in the header of each column:
\begin{center}
\begin{tabular}{c|c|c|c|c} 
 Component & non-white & non-white & white & white \\ 
 \hline
 $pCTR$ & $0.1$ & $0.4$ & $0.2$ & $0.3$ \\ 
 $pRating$ & $0.4$ & $0.1$ & $0.3$ & $0.2$ \\  
 \hline
  $pCTR \times pRating$  & $0.04$ & $0.04$ & $0.06$ & $0.06$\\ 
\end{tabular}
\end{center}
Each component exposes books from each group equally: if we rank the items by either $pCTR$ or $pRating$ individually, we get [non-white, white, white, non-white].
However, when the two components are multiplied together, the composite ranking ([white, white, non-white, non-white]) does not result in demographic parity\footnote{One might ask: why not just rank by \emph{either} pCTR or pRating? Unfortunately, doing that would result in either increased clickbait (if only focusing on pCTR) or unappealing-looking items (if ignoring pCTR).  This is why recommender systems incorporate multiple signals and limits the flexibility of the composition function.}. 
In Section \ref{sec:definition}, we provide a broader theoretical analysis. 
We study two recently proposed ranking fairness metrics \cite{DBLP:conf/kdd/SinghJ18, biega2018equity, DBLP:conf/kdd/BeutelCDQWWHZHC19, diaz2020evaluating}, each capturing slightly different goals.
For both, we consider the broader class of multiplicative composition, as it is common in production recommender systems \cite{zhe2019, xiao2018}, and without loss of generality extends to additive composition by moving to the log-space, as is also common in practice \cite{youtube2010,claypool1999combing}.

In response to these bleak results, 
we observe that these are worst-case guarantees, and thus flip to the constructive question: \emph{when does} recommender fairness compose?  What can we do in a real recommender system?  We explore this both theoretically and empirically.  We first prove a set of conditions about the models' predictions under which recommender fairness would compose across components. With this theoretical understanding, we develop an analytical framework for counterfactually testing a multi-model recommender system.  That is, given an existing recommender system, we provide a method for measuring how well the system would meet a given recommender fairness goal if each model in the system was (independently) made fair under that metric.  We find across multiple applications, including a real-world recommender, that these fairness goals can largely be met through this per-component approach, both through our analytical framework and then through modeling experiments.  Further, we demonstrate that counterfactual testing enabled by the analytical framework can be used to uncover which components are most limiting fairness metrics, which could then be used to prioritize modeling improvements.  Taken together, we believe this suggests a path toward improving fairness in recommender systems.
In summary, our contributions in this paper are:

\textbf{Theory:} We provide theory showing a set of conditions under which fair components can compose into fair systems.

\textbf{System Understanding:} We provide an analytical framework of counterfactual testing to measure whether a system's signals can achieve compositional fairness and to diagnose which of these signals lowers the overall system fairness most.

\textbf{Experiments:} Although compositional fairness is theoretically not guaranteed, on multiple data-sets, including a large-scale production recommender system, we apply our counterfactual testing and find that the overall system fairness is largely achievable by improving fairness in individual components. We subsequently test a training-time regularization and find that while no individual component can be improved to make the system ``fair,'' applying the approach to all of the components (separately) is highly effective. 

\section{Related Work}

\textbf{Fairness in Classification:}
The majority of the fairness metric definition literature focuses on classification. 
\textit{Demographic parity} \cite{Calders:2009:BCI:1674660.1677211, liobait2015OnTR, Zafar2015LearningFC} is a common way of addressing discrimination against protected attributes. It requires a decision to be independent of the protected attribute. \textit{Equalized odds} \cite{hardt2016equality} require a predictor $\hat{Y}$ to be independent of the protected attribute, conditioned on the true label $Y$. 
Metrics have also been explored over continuous scores from a classifier: \cite{kallus2019fairness,borkan2019nuanced,narasimhan2019pairwise} all break down AUC of these scores into Mann-Whitney U-tests \cite{mwu_test}.

\textbf{Fairness in Ranking:}
Recently, a few definitions have been proposed for fairness in the \textit{ranking} setting \cite{zehlike2017fa}; in our work we focus on two recent framings.
\citet{DBLP:conf/kdd/SinghJ18} propose measuring the exposure an item or group of items gets depending on what position they fall in a ranking. (Similar metrics were proposed by \citet{biega2018equity}, and further studied by \citet{diaz2020evaluating}.)  The work offers multiple fairness goals, such as exposure proportional to relevance, but in our usage we build on this notion of exposure to measure group representation throughout a ranked list (as we ignore any label or relevance, this is philosophically closer to the principles of demographic parity above).
\citet{DBLP:conf/kdd/BeutelCDQWWHZHC19} focus on measuring accuracy in a recommender system based on \textit{pairwise comparisons}. 
The accuracy of a ranking for a pair of items is defined as the probability that the clicked item is ranked higher than the un-clicked item. 
In this set-up, two items from different groups are used to create a pair, and the difference in accuracy for each group is used as a fairness metric.  We use this metric to capture fairness in ranking more closely aligned with equal opportunity (as it is measuring accuracy with respect to a label---clicks).
In Section~\ref{sec:definition}, we will formalize the above two definitions within the same framework.
For each of the ranking metrics, given per-component fairness, we will show conditions where compositional fairness holds (and counter-examples where it might not hold).
 
A closely related area to ranking fairness is ranking diversification \cite{diverse_icml10, diverse_icml08, diverse_www09, diverse_vldb11, diverse_wsdm09, diverse_sigir98}, where the goal is to diversify the ranking results to improve user satisfaction.
In many cases, general-purpose diversification does not align with fairness for certain sub-groups.

\textbf{Compositional Fairness:} 
\citet{dwork2018fairness} studied general constructions for fair composition, and showed that classifiers that are fair in isolation do not necessarily compose into fair systems, for individuals or for groups. 
The authors studied the ``Functional Composition" setting, where the assumption is that the binary outputs of multiple classifiers are combined through \textit{logical operations} to produce a single output for a single task.
More recent work by \citet{dwork2020individual} explores composition of individual fairness in pipelines.
In this paper, we focus on fairness in recommender systems and a general multiplicative compositional setting (or equivalently additive composition in the log-space), as is common in many real ranking systems \cite{zhe2019, xiao2018, youtube2010}.

\textbf{Mitigation:}
Many mitigation approaches try to achieve fairness in the \textit{single-task} (non-compositional) setting.
The approaches can be partitioned by when they intervene in the creation of an ML system: pre-processing (e.g., \cite{tolga2016}), post-processing (e.g., \cite{calibration17,post19, DBLP:conf/kdd/SinghJ18,DBLP:conf/kdd/GeyikAK19}),
and training, including adding constraints \cite{gupta16, gupta19,DBLP:conf/icml/AgarwalBD0W18}, regularization \cite{pmlr-v54-zafar17a,beutel2019putting}, and adversarial learning \cite{variation16,adv_learning18,beutel17,DBLP:conf/icml/MadrasCPZ18}.
Post-processing approaches do not suffer from compositional challenges, but are often not feasible in practice due to not having demographic information at inference time.
The regularization approaches are most similar to our analysis, encouraging matching the distribution of predictions \cite{pmlr-v54-zafar17a,beutel2019putting} or representation \cite{beutel17,DBLP:conf/icml/MadrasCPZ18} across groups.

\section{Metrics and Theoretical Analysis}
\label{sec:definition}
In this section, we formally define the problem:
denote $x$ as the item being ranked, and there are two groups of users being considered, Group $\mathcal{A}$ and Group $\mathcal{B}$. 
Assume a recommender system with $K$ components, where the $k$-th component takes an input $x$ and produces its own score $f_k(x)$;
the broader class of multiplicative composition (\cite{zhe2019, xiao2018}) can be formulated as: $f(x)=\prod_{k=0}^{K-1} f_k(x)$.
If all $K$ components satisfy some fairness metric by themselves $F[f_k]$, 
we ask whether the overall function $f(x)$ satisfies the same fairness metric $F[f]$. Restated, we would like to know whether the system achieves compositional (end-to-end) fairness given that each component has achieved fairness independently.

In the sections below, we consider two commonly-used fairness metrics in recommender systems: \textit{ranking exposure} \cite{DBLP:conf/kdd/SinghJ18}, and \textit{pairwise ranking accuracy} \cite{DBLP:conf/kdd/BeutelCDQWWHZHC19}. We describe each metric, and explore how the function composition affects end-to-end fairness.

\subsection{Fairness in Ranking Exposure}

\subsubsection{Definition.}
\label{sec:def_exposure}
We start with the ranking exposure \cite{DBLP:conf/kdd/SinghJ18} fairness metric.
Formally, the ranking exposure for Group $\mathcal{A}$ is defined as:
\[
\text{Exposure}(\mathcal{A}|r) = \sum\nolimits_{x\in \mathcal{A}} u(x|r),
\]
under a ranking order $r$. Here $u$ denotes the utility function, which is usually a monotonically decreasing function with respect to the rank of an item. One common choice is to use an exponent $w>=0$, and $u(x|r) = [\text{rank}(x|r)]^{-w}$.
The fairness exposure metric between Group $\mathcal{A}$ and Group $\mathcal{B}$ under a ranking order $r$ is defined as:
\begin{equation}
\text{Gap}_{\text{exp}}(\mathcal{A}, \mathcal{B}|r) = \frac{|\text{Exposure}(\mathcal{A}|r) - \text{Exposure}(\mathcal{B}|r)|}{\text{Exposure}(\mathcal{A}|r) + \text{Exposure}(\mathcal{B}|r)},
\label{eq:exposure_gap}
\end{equation}
which denotes the normalized gap between the exposure for Group $\mathcal{A}$ and $\mathcal{B}$.
Note, when the two groups have the same size, i.e., $|\mathcal{A}|=|\mathcal{B}|$, the ideal exposure gap should reach zero in order to reach demographic parity \cite{Calders:2009:BCI:1674660.1677211, liobait2015OnTR, Zafar2015LearningFC}. This might not be the case when the two groups have different sizes.

Intuitively, this metric (here unconditioned on relevance) makes the goal to provide a diverse ranking with each group being well represented throughout the ranked list.  While we build on \cite{DBLP:conf/kdd/SinghJ18} for framing, similar intuitions were also proposed in \cite{zehlike2017fa,biega2018equity} and used in job search applications \cite{DBLP:conf/kdd/GeyikAK19}.

\subsubsection{Counter-example of Composition.}
We offer a counter-example and show that under the fair exposure metric, per-component fairness does not always guarantee compositional fairness.
Consider a ranking system composed of two components $f_0, f_1$, two groups $\mathcal{A}, \mathcal{B}$, and each group has two items.
For any $a>0, \epsilon>0$, suppose: 
\begin{center}
\setlength\tabcolsep{3pt}
\begin{tabular}{c|c|c|c|c} 
 Component & $x^a_1\in \mathcal{A}$ & $x^a_2\in \mathcal{A}$ & $x^b_1\in\mathcal{B}$ & $x^b_2\in \mathcal{B}$  \\ 
 \hline
 $f_0(x)$ & $a+\epsilon$ & $a+4\epsilon$ & $a+2\epsilon$ & $a+3\epsilon$ \\ 
 $f_1(x)$ & $a+4\epsilon$ & $a+\epsilon$ & $a+3\epsilon$ &$a+2\epsilon$\\
 \hline
 Exposure of $f_0(x)$ & $e_1$&  $e_0$ & $e_1$ &  $e_0$\\
 Exposure of $f_1(x)$ & $e_0$&  $e_1$ & $e_0$ &  $e_1$\\
 \hline
 Exposure of $f_0(x)\cdot f_1(x)$ & $e_1$&  $e_1$ & $e_0$ &  $e_0$
\end{tabular}
\end{center}

Here we assume the first two positions receive a higher exposure $e_0$ than the exposure $e_1$ for the last two positions (i.e., $1\geq e_0 > e_1 \geq 0$), and it is easy to see that each component by itself is fair, since $\text{Exposure}(\mathcal{A}|r_k) = \text{Exposure}(\mathcal{B}|r_k)=e_0+e_1,\ k \in \{0, 1\}$, 
$r_k$ being the ranking order determined by $f_k(x)$. 
But when combined by $f(x) = f_0(x) \cdot f_1(x)$, Group $\mathcal{A}$ is always ranked below Group $\mathcal{B}$. Specifically, $\text{Gap}_{\text{exp}}(\mathcal{A}, \mathcal{B}|r) = \frac{|e_0-e_1|}{e_0+e_1}$, and could be as large as $1$ if $e_0=1, e_1=0$.
We can also see that the magnitude of the scores does not play a role here, since we can make $\epsilon$ arbitrarily small to make the magnitude of the score for the two components arbitrarily close to each other ($\rightarrow a$), while keeping $\text{Gap}_{\text{exp}}(\mathcal{A}, \mathcal{B}|r) = 1$.

\subsubsection{Condition for composition of ranking exposure}
Now we present theory showing under what conditions we will achieve compositional fairness given we have per-component fairness, under the ranking exposure (\S\ref{sec:def_exposure}) metric. 

Consider a system with two components $f_0$ and $f_1$, and two groups $\mathcal{A}, \mathcal{B}$.
Let $X^0_A$ represent the random variable defined by $\log f_0(x), x\in \mathcal{A}$, and $X^0_B$ for $\log f_0(x), x\in \mathcal{B}$. Similarly we define $X^1_A, X^1_B$ for $f_1$.
Assume the top half and the bottom half of the items in the ranked list receive different exposure values $u(x|r)$ by sorting $f_0$ and $f_1$ in descending order, respectively.
Assume we have per-component fairness in the log-space, i.e., $\mathrm{median}[X^0_A] = \mathrm{median}[X^0_B]$ for $\log f_0(x)$ and 
$\mathrm{median}[X^1_A] = \mathrm{median}[X^1_B]$ for $\log f_1(x)$.
We have the following theorem:
\begin{theorem}
If $X^0_A, X^0_B, X^1_A, X^1_B$ are symmetric random variables such that $X^0_A + X^1_A$ and $X^0_B+X^1_B$ are also symmetric, then per-component fairness on $\log f_0$ and $\log f_1$ means we have compositional fairness for $\log f(x)$, where $f(x) = f_0(x) \cdot f_1(x)$.
\label{thm:exposure}
\end{theorem}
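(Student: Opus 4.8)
The plan is to translate the whole statement into one about medians of log-scores and then exploit the fact that for a symmetric random variable the median, the center of symmetry, and the mean all coincide. Because $\log$ is monotone, ranking items by $f = f_0\cdot f_1$ is the same as ranking them by $\log f = \log f_0 + \log f_1$, so the composite log-score of a random item of $\mathcal{A}$ is distributed as $X^0_A + X^1_A$ and that of a random item of $\mathcal{B}$ as $X^0_B + X^1_B$. Under the two-level exposure simplification with $|\mathcal{A}| = |\mathcal{B}|$, a group's exposure depends only on how many of its items land in the top half of the list, so (as already noted before the theorem) per-component fairness is exactly $\mathrm{median}[X^0_A] = \mathrm{median}[X^0_B]$ and $\mathrm{median}[X^1_A] = \mathrm{median}[X^1_B]$, while compositional fairness for $f$ is exactly $\mathrm{median}[X^0_A + X^1_A] = \mathrm{median}[X^0_B + X^1_B]$. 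It therefore suffices to prove this last identity.

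First I would record the elementary fact that if $Z$ is symmetric about $c$ (meaning $Z-c$ and $c-Z$ have the same distribution) and has a finite first moment, then $c = \mathbb{E}[Z]$ and $c$ is a median of $Z$; in our setting every group has finitely many items, so all distributions have finite support, moments exist, and we may assume the medians are uniquely defined (e.g.\ by genericity of the scores, and recalling positivity of the $f_k$ is already part of the setup so that $\log f_k$ makes sense). Applying this to $X^0_A$ and $X^0_B$, which are symmetric by hypothesis, and using per-component fairness gives $\mathbb{E}[X^0_A] = \mathrm{median}[X^0_A] = \mathrm{median}[X^0_B] = \mathbb{E}[X^0_B]$, and likewise $\mathbb{E}[X^1_A] = \mathbb{E}[X^1_B]$. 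Now push this through the composition by linearity of expectation: $\mathbb{E}[X^0_A + X^1_A] = \mathbb{E}[X^0_A] + \mathbb{E}[X^1_A] = \mathbb{E}[X^0_B] + \mathbb{E}[X^1_B] = \mathbb{E}[X^0_B + X^1_B]$. Since $X^0_A + X^1_A$ and $X^0_B + X^1_B$ are themselves symmetric by hypothesis, each has its median equal to its mean, whence $\mathrm{median}[X^0_A + X^1_A] = \mathbb{E}[X^0_A + X^1_A] = \mathbb{E}[X^0_B + X^1_B] = \mathrm{median}[X^0_B + X^1_B]$, which is compositional fairness.

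The step that genuinely needs the extra hypothesis — and hence the main obstacle — is controlling the median of the sum: within a group the scores $f_0(x)$ and $f_1(x)$ are evaluated at the same item $x$, so $X^0_A$ and $X^1_A$ are dependent, the law of their sum is not the convolution of the marginals, and a sum of two symmetric-about-zero variables need not be symmetric. Without the assumed symmetry of $X^0_A + X^1_A$ and $X^0_B + X^1_B$ there is no way to pin the median of the sum to $\mathbb{E}[X^0_A]+\mathbb{E}[X^1_A]$, and the counter-examples in \S\ref{sec:def_exposure} are precisely instances where that symmetry fails. The remaining points are routine: checking that ``equal exposure'' collapses to ``equal medians'' when $|\mathcal{A}|=|\mathcal{B}|$ under the two-level model, and handling uniqueness/ties of the median. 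The same argument extends to $K+1$ components whenever every $X^k_A, X^k_B$ and every total sum $\sum_k X^k_A,\ \sum_k X^k_B$ is symmetric.
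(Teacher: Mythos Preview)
Your proposal is correct and follows essentially the same argument as the paper: convert to log-scores, use that for a symmetric random variable the median equals the mean, pass from medians to means via the symmetry of each $X^k_G$, apply linearity of expectation and per-component fairness, and then convert back via the assumed symmetry of the sums. The only differences are cosmetic (you front-load the ``symmetric $\Rightarrow$ median $=$ mean'' lemma and add helpful remarks on dependence, ties, and the $K+1$-component extension), not substantive.
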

\begin{proof}
By composing $f_0$ and $f_1$ we have:
${\mathrm{median}_{x\in \mathcal{A}}}[\log(f_0(x) \cdot f_1(x))]
= {\mathrm{median}}_{x\in \mathcal{A}}[\log f_0(x) + \log f_1(x)] 
= \mathrm{median}[X^0_A + X^1_A]$, and since:
\begin{align*}
&\mathrm{median}[X^0_A + X^1_A] = \mathrm{mean}[X^0_A + X^1_A] 
\ \ \text{(by symmetry of $X^0_A + X^1_A$)}\\
=& \mathrm{mean}[X^0_A] + \mathrm{mean}[X^1_A] 
\qquad\qquad\qquad\text{(by linearity of expectation)} \\
=& \mathrm{median}[X^0_A] + \mathrm{median}[X^1_A]
\qquad\qquad\ \ \text{(by symmetry of $X^0_A$ and $X^1_A$)}\\
=& \mathrm{median}[X^0_B] + \mathrm{median}[X^1_B] 
\qquad\qquad\ \ \text{(by per-component fairness)}\\
=& \mathrm{mean}[X^0_B] + \mathrm{mean}[X^1_B] 
\qquad\qquad\qquad\text{(by symmetry of $X^0_B$ and $X^1_B$)}\\
=& \mathrm{mean}[X^0_B + X^1_B]
\qquad\qquad\qquad\qquad\quad\text{(by linearity of expectation)}\\
=& \mathrm{median}[X^0_B + X^1_B],
\qquad\qquad\qquad\quad\ \ \ \ \text{(by symmetry of $X^0_B + X^1_B$)}
\end{align*}
which equals to ${\mathrm{median}}_{x\in \mathcal{B}}[\log f_0(x) + \log f_1(x)]$ and thus \\${\mathrm{median}}_{x\in \mathcal{B}}[\log(f_0(x) \cdot f_1(x))]$.
\end{proof}
To empirically test whether a random variable $X$ is symmetric around a given mean $\theta$, one could construct samples $(X, 2\theta-X)$ and test whether the two distributions are the same by a two-sample Kolmogorov–Smirnov test or a kernel two-sample test \cite{mmd}. 
Further, for compositional fairness in the entire system, we need to convert $\log f_0, \log f_1, \log f$ back to the original space $f_0, f_1, f$. The median element remains unchanged by the monotonicity of the $\log$ function when there is an odd number of samples in each group, in which case we will have 
$
{\mathrm{median}}_{x\in \mathcal{A}}[f_0(x) \cdot f_1(x)]=
{\mathrm{median}}_{x\in \mathcal{B}}[f_0(x) \cdot f_1(x)].
$
When there is an even number of samples per group, then we also need the median in the original space to be the same across groups in order for compositional fairness to hold.

\subsection{Fairness as Pairwise Ranking Accuracy}
\label{sec:def_pairwise}
\subsubsection{Definition.}
Another commonly used fairness metric in ranking is \textit{pairwise ranking accuracy} \cite{DBLP:conf/kdd/BeutelCDQWWHZHC19}, where the idea is to compute the accuracy of a system ranking a pair of items correctly conditioned on the true outcome. The pair of items is constrained to come from two different groups, $\mathcal{A}$ and $\mathcal{B}$, through randomized experiments.
Formally, the metric is defined as:
\begin{align*}
\text{PairAcc}(\mathcal{A} > \mathcal{B}|r) 
&=\ P(f(x_i) > f(x_j) | y_i > y_j, x_i\in \mathcal{A}, x_j\in \mathcal{B}) \\
&=\ \frac{P(f(x_i) > f(x_j), y_i > y_j|x_i\in \mathcal{A}, x_j\in \mathcal{B})}{P(y_i > y_j|x_i\in \mathcal{A}, x_j\in \mathcal{B})}.
\end{align*}
Here $y_i\in\{0, 1\}$ denotes the observed binary outcome for $x_i$ (e.g., $y_i=1$ means a recommended item is clicked, or a recommended product is purchased).
Correspondingly, the Pairwise Ranking Accuracy Gap is defined as:
\begin{align}
\text{Gap}_{\text{pair}}(\mathcal{A}, \mathcal{B}|r)= |\text{PairAcc}(\mathcal{A} > \mathcal{B}|r) - \text{PairAcc}(\mathcal{A} < \mathcal{B}|r)| \label{eq:pairwise_gap}
\end{align}
\begin{remark}
Intuitively, this metric means that given a pair of items, one from group $\mathcal{A}$ and one from group $\mathcal{B}$, conditioned on one with $y_i=1$ and the other with $y_i=0$,
we would like the system to have the same accuracy of ranking this pair of items correctly, regardless of which group the item with positive outcome ($y_i=1$) is from.
\end{remark}

Further, let $X^k_{A_0}$ represent the random variable defined by $f_k(x_i)$ for $y_i=0$ and $x_i\in \mathcal{A}$; and let $X^k_{A_1}$ represent the random variable defined by $f_k(x_i)$ for $y_i=1$ and $x_i\in \mathcal{A}$. $X_{B_0}, X_{B_1}$ are defined similarly. 
We can simply write the Pairwise Ranking Accuracy Gap metric as:
$
\text{Gap}_{\text{pair}}(\mathcal{A}, \mathcal{B}|r) = |P(X_{A_1} > X_{B_0}) - P(X_{B_1} > X_{A_0})|.
$

\subsubsection{Counter-example of composition.} In the following we present a simple example that shows per-component fairness might not lead to compositional fairness, under the pairwise ranking accuracy gap metric.
Consider the following system with two components, two groups, and two corresponding items within each group:
\begin{center}
\begin{tabular}{c|cc|cc}
Component & $X^k_{A_1}$ & $X^k_{B_0}$ & $X^k_{B_1}$ & $X^k_{A_0}$\\
\hline
$f_0(x) $ & $\{1, 4\}$ & $\{2, 3\}$ & $\{1, 4\}$ & $\{2, 3\}$\\ 
$f_1(x) $ & $\{4, 1\}$ & $\{3, 2\}$ & $\{1, 4\}$ & $\{3, 2\}$\\
\hline
PairAcc of $f_0(x)$ & \multicolumn{2}{c|}{0.5} & \multicolumn{2}{c}{0.5}\\
PairAcc of $f_1(x)$ & \multicolumn{2}{c|}{0.5} & \multicolumn{2}{c}{0.5}\\
\hline
$f_0(x) \cdot f_1(x)$ & $\{4, 4\}$ & $\{6, 6\}$ & $\{1, 16\}$ & $\{6, 6\}$\\ 
\hline
PairAcc of $f_0(x) \cdot f_1(x)$ & \multicolumn{2}{c|}{0.0} & \multicolumn{2}{c}{0.5}\\
\end{tabular}
\end{center}
It is easy to see that each component is fair.
When composed, we have $\text{PairAcc}(\mathcal{A} > \mathcal{B}|r)=0.0$, because both items with $y_i=1$ from $\mathcal{A}$ receive a lower prediction score $\{4, 4\}$ than  items with $y_i=0$ from $\mathcal{B}$: $\{6, 6\}$.
Similarly we have $\text{PairAcc}(\mathcal{A} < \mathcal{B}|r)=0.5$ and hence $\text{Gap}_{\text{pair}}(\mathcal{A}, \mathcal{B}|r)=0.5$.
In other words, the predictor $f$ does not have equal treatment for ranking the items from $\mathcal{A}$ and $\mathcal{B}$.

\subsubsection{Condition for composition of pairwise ranking accuracy}
We provide a theorem showing under which conditions compositional fairness holds under the pairwise ranking accuracy gap metric (Eq.~\eqref{eq:pairwise_gap}). 
We denote $\setone_0$ and $\setone_1$ to be the set of items from $\setone$ with $y=0$ and $y=1$, respectively; we similarly define $\settwo_0$ and $\settwo_1$.
We assume equal size on the groups, i.e., $|\setone_\ell|=|\settwo_\ell|, \ell=\{0, 1\}$.\footnote{Note, in order to form pairs, the definition of pairwise ranking accuracy additionally implies $|\mathcal{A}_1|=|\mathcal{B}_0|$ and  $|\mathcal{A}_0|=|\mathcal{B}_1|$.} 
We define a delta term between all pairs from $\mathcal{A}_1$ and $\mathcal{B}_0$ and similarly between all pairs from $\mathcal{B}_1$ and $\mathcal{A}_0$, i.e.,
\begin{align}
\Delta_k(\mathcal{A}_1, \mathcal{B}_0)=\{f_k(x_i) - f_k(x_j) \mid x_i\in \mathcal{A}_1, x_j\in \mathcal{B}_0\};\nonumber\\
\Delta_k(\mathcal{B}_1, \mathcal{A}_0)=\{f_k(x_i) - f_k(x_j) \mid x_i\in \mathcal{B}_1, x_j\in \mathcal{A}_0\}.
\label{eq:delta_pairwise_ranking_acc}
\end{align}

Let $Z^0_{A_1, B_0}, Z^0_{A_0, B_1}$ represent the random variables defined by $\Delta_0(\mathcal{A}_1, \mathcal{B}_0)$, $\Delta_0(\mathcal{B}_1, \mathcal{A}_0)$, respectively, for $f_0$. Similarly we define $Z^1_{A_1, B_0}$, $Z^1_{A_0, B_1}$ for $f_1$.
Without loss of generality assume all component scores are positive, i.e., $X_{A_1}^0 >0, X_{B_0}^1>0, X_{B_1}^0 >0, X_{A_0}^1>0$ (if not we can shift the scores without changing the overall ranking). We have the following theorem for compositional fairness:
\begin{theorem}
If the following hold:
{\small
\begin{align}
(X_{A_1}^0 Z^1_{A_1, B_0} + X_{B_0}^1 Z^0_{A_1, B_0})^{(+)}=(X_{A_1}^0 Z^1_{A_1, B_0})^{(+)} + (X_{B_0}^1 Z^0_{A_1, B_0})^{(+)},
\label{eq:assumption1}
\end{align}
\begin{align}
(X_{B_1}^0 Z^1_{A_0, B_1} + X_{A_0}^1 Z^0_{A_0, B_1})^{(+)}=(X_{B_1}^0 Z^1_{A_0, B_1})^{(+)} + (X_{A_0}^1 Z^0_{A_0, B_1})^{(+)},
\label{eq:assumption2}
\end{align}
}%
where $Z^{(+)}=P(Z>0)$, then per-component fairness on $f_0$ and $f_1$ means we have compositional fairness for $f(x) = f_0(x) \cdot f_1(x)$.
\label{thm:pairwise}
\end{theorem}
\begin{proof}
With per-component fairness we have:
\begin{align*}
\text{ for } f_0, P(X_{A_1}^0 > X_{B_0}^0) &= P(X_{B_1}^0 > X_{A_0}^0), \text{ or } (Z^0_{A_1, B_0})^{(+)}=(Z^0_{A_0, B_1})^{(+)}, \\
\text{ for } f_1, P(X_{A_1}^1 > X_{B_0}^1) &= P(X_{B_1}^1 > X_{A_0}^1), \text{ or } (Z^1_{A_1, B_0})^{(+)}=(Z^1_{A_0, B_1})^{(+)},
\end{align*}
by composing $f_0$ and $f_1$ we have:
\begin{align*}
&(X_{A_1}^0 \cdot X_{A_1}^1 - X_{B_0}^0 \cdot X_{B_0}^1)^{(+)}\\
=&(X_{A_1}^0 \cdot X_{A_1}^1 - X_{A_1}^0 \cdot X_{B_0}^1 + X_{A_1}^0 \cdot X_{B_0}^1 - X_{B_0}^0 \cdot X_{B_0}^1)^{(+)}\\
=&(X_{A_1}^0 \cdot Z^1_{A_1, B_0} + X_{B_0}^1 \cdot Z^0_{A_1, B_0})^{(+)}\\
=&(X_{A_1}^0 \cdot Z^1_{A_1, B_0})^{(+)} + (X_{B_0}^1 \cdot Z^0_{A_1, B_0})^{(+)} 
\quad\text{(by Eq.~\eqref{eq:assumption1})}\\
=&(X_{A_1}^0 \cdot Z^1_{A_0, B_1})^{(+)} + (X_{B_0}^1 \cdot Z^0_{A_0, B_1})^{(+)} 
\quad\text{(by per-component fairness)}\\
=&(Z^1_{A_0, B_1})^{(+)} + (Z^0_{A_0, B_1})^{(+)} 
\qquad\qquad\quad\ \ \text{(by $X_{A_1}^0 >0, X_{B_0}^1>0$)}\\
=&(X_{B_1}^0 \cdot Z^1_{A_0, B_1})^{(+)} + (X_{A_0}^1 \cdot Z^0_{A_0, B_1})^{(+)} 
\quad\text{(by $X_{B_1}^0 >0, X_{A_0}^1>0$)}\\
=&(X_{B_1}^0 \cdot Z^1_{A_0, B_1} + X_{A_0}^1 \cdot Z^0_{A_0, B_1})^{(+)} 
\qquad\quad\text{(by Eq.~\eqref{eq:assumption2})}\\
=&(X_{B_1}^0 \cdot X_{B_1}^1 - X_{B_1}^0 \cdot X_{A_0}^1 + X_{B_1}^0 \cdot X_{A_0}^1 - X_{A_0}^0 \cdot X_{A_0}^1)^{(+)}\\
=&(X_{B_1}^0 \cdot X_{B_1}^1 - X_{A_0}^0 \cdot X_{A_0}^1)^{(+)},
\end{align*}
i.e., $P(X_{A_1}^0 \cdot X_{A_1}^1 > X_{B_0}^0 \cdot X_{B_0}^1) = P(X_{B_1}^0 \cdot X_{B_1}^1 > X_{A_0}^0 \cdot X_{A_0}^1)$, hence compositional fairness holds.
\end{proof}
One scenario for Eq.~\eqref{eq:assumption1} to hold is when there is a perfect match on the signs of $X_{A_1}^0 Z^1_{A_1, B_0}$ and $X_{B_0}^1 Z^0_{A_1, B_0}$ (similarly for Eq.~\eqref{eq:assumption2}). Alternatively, if the change of signs by comparing ($X_{A_1}^0 Z^1_{A_1, B_0}+X_{B_0}^1 Z^0_{A_1, B_0}$) to ($X_{A_1}^0 Z^1_{A_1, B_0}, X_{B_0}^1 Z^0_{A_1, B_0}$) adds up to zero, Eq.~\eqref{eq:assumption1} also holds.

\section{Analytical Framework and Modeling Solutions}
\label{sec:analytical}
As we see in the theoretical results, improving the fairness of individual components \emph{sometimes} benefits the fairness of the composite score, depending on the components and the relationship between them.  Therefore we ask: if we have a multi-component system where we observe fairness issues, how much will improving the fairness for each component help the overall system's fairness?

Taking this data-driven view of the problem, we find there are multiple questions that we can tractably answer:
\begin{enumerate}
    \item Given a system with fairness issues, improving which components would yield the greatest benefit?
    \item If all components were independently ``fixed'', what would be the resulting fairness metrics for the combined system?
\end{enumerate}

\subsection{Per-Component Fixes}
To discover which components would be most beneficial in improving fairness, we explore the impact of two realistic classes of methods.  

\subsubsection{Distribution matching for ranking exposure}
A significant amount of academic literature \cite{mmd,mcgan17} and publications on what is used practice \cite{beutel2019putting, DBLP:conf/kdd/BeutelCDQWWHZHC19}, takes the perspective of regularizing the model such that the distribution of predictions from each group (sometimes conditioned on the label) is matching.  Under different formulations this has been done by comparing the covariance \cite{pmlr-v54-zafar17a}, correlation \cite{beutel2019putting, DBLP:conf/kdd/BeutelCDQWWHZHC19}, and Maximum Mean Discrepancy \cite{mmd, MuaFukSriSch17} between the distributions.  Therefore, we consider whether matching the groups' distributions of predictions for each model has the desired effect on the combined fairness metrics.  

As this is an analytical framework, in contrast to a training framework, we can easily do this offline by directly changing the predictions over our dataset.  We consider distribution matching for a component $f_k$.  In order to match the distributions, we sort all examples in each group by their scores $f_k$.  
We define by $\mathbf{a}^{(k)}$ a sorted vector of scores for examples in Group $\setone$ and by $\phi_{a,k}$ the mapping of examples to positions in this sorted list, i.e. $\mathbf{a}^{(k)}_{\phi_{a,k}(x)} = f_k(x)$ and $\mathbf{a}^{(k)}_j \leq \mathbf{a}^{(k)}_{j+1}$ for all $j$; we similarly define $\mathbf{b}^{(k)}$ and $\phi_{b,k}$ for examples from Group $\settwo$.  
Assuming $|\setone| = |\settwo|$, when matching the distributions, we define the ``fixed'' component $\tilde{f}_k$ as follows:
\begin{align}
    \tilde{f}_k(x) = f_k(x), \mbox{ if } x \in \setone; 
    \quad\tilde{f}_k(x) = \mathbf{a}^{(k)}_{\phi_{b,k}(x)}, \mbox{ if } x \in \settwo.
\label{eq:dist_match}
\end{align}
That is, for examples in $\settwo$, $\tilde{f}_k$ returns the score for a similarly ranked item from $\setone$ such that the empirical distribution over $\setone$ and $\settwo$ exactly matches.
Note, $\mathbf{a}$ and $\mathbf{b}$ are the empirical cumulative distribution function (CDF) for $f_k$ over $\setone$ and $\settwo$ respectively, and as such if $|\setone| \neq |\settwo|$ then simple interpolation to match the empirical CDFs can be used.
\begin{theorem}
$\tilde{f}_k$ as defined by Eq. \eqref{eq:dist_match} has an exposure gap (defined by Eq. \eqref{eq:exposure_gap}) of zero, assuming the ranking order $r$ based on $\tilde{f}_k$ gives the exact same rank of $x$ given the same score of $\tilde{f}_k(x)$.\footnote{Note in real applications this is usually not the case since a tie-breaking strategy is needed for two items with the same score. Assuming the tie-breaking strategy is random, $\tilde{f}_k$ should achieve an exposure gap close to zero.}
\end{theorem}
\begin{proof}
Given the definition in Eq. \eqref{eq:exposure_gap}, it is easy to see that
\begin{align*}
\text{Gap}_{\text{exp}}(\mathcal{A}, \mathcal{B}|r) &= \frac{|\text{Exposure}(\mathcal{A}|r) - \text{Exposure}(\mathcal{B}|r)|}{\text{Exposure}(\mathcal{A}|r) + \text{Exposure}(\mathcal{B}|r)} \\
&= \frac{|\sum_{x\in \mathcal{A}} [\text{rank}(x)]^{-w} - \sum_{x\in \mathcal{B}} [\text{rank}(x)]^{-w}|}{\text{Exposure}(\mathcal{A}|r) + \text{Exposure}(\mathcal{B}|r)}= 0.
\end{align*}
Because there is an exact one-to-one correspondence of $\tilde{f}_k(x_i), x_i\in\mathcal{A}$ and $\tilde{f}_k(x_j), x_j\in \mathcal{B}$ that gives exactly one pair of $\text{rank}(x_i)=\text{rank}(x_j)$ from each group, which cancels each other given $|\setone| = |\settwo|$ and thus results in a zero gap.
\end{proof}

\subsubsection{Label-conditioned distribution matching for pairwise ranking accuracy}
\label{sec:label-conditioned-dm}
The above approach only recalibrates the predictions by group but does not necessarily align with any labels for the task.  As such, for the pairwise ranking accuracy gap metric (Eq.~\eqref{eq:pairwise_gap}) the method as described is not guaranteed to give per-component pairwise fairness.  For that, we provide a slight modification of the algorithm above, i.e., we exactly match the empirical distribution between $\Delta_k(\mathcal{A}_1, \mathcal{B}_0)$ and $\Delta_k(\mathcal{B}_1, \mathcal{A}_0)$ (Eq.~\eqref{eq:delta_pairwise_ranking_acc}), which aligns with the regularization proposed in \cite{DBLP:conf/kdd/BeutelCDQWWHZHC19}. In the following we show this label-conditioned distribution matching suffices to achieve pairwise ranking fairness for each component $k$.

\begin{theorem}
$\hat{f}_k$ as defined by matching $\Delta_k(\mathcal{A}_1, \mathcal{B}_0)$ and $\Delta_k(\mathcal{B}_1, \mathcal{A}_0)$ has a pairwise ranking accuracy gap, defined in Eq. \eqref{eq:pairwise_gap} of 0.
\label{thm:pairwise_gap}
\end{theorem}
\begin{proof}
The Pairwise Ranking Accuracy $\text{PairAcc}(\mathcal{A} > \mathcal{B}|r)$ is given by 
$
\frac{\sum_{x_i\in \mathcal{A}_1, x_j\in \mathcal{B}_0} I[\hat{f}_k(x_i) > \hat{f}_k(x_j)]}{|\mathcal{A}_1|\cdot |\mathcal{B}_0|}$,
i.e., it is equal to the percentage of positive deltas in 
$\hat{\Delta}_k(\mathcal{A}_1, \mathcal{B}_0)=\{\hat{f}_k(x_i) - \hat{f}_k(x_j) \mid x_i\in \mathcal{A}_1, x_j\in \mathcal{B}_0\}$ by definition.
Similarly, $\text{PairAcc}(\mathcal{B} > \mathcal{A}|r)$ is equal to the percentage of positive deltas in 
$\hat{\Delta}_k(\mathcal{B}_1, \mathcal{A}_0)$.

Given that we exactly matched the delta terms in $\hat{\Delta}_k(\mathcal{A}_1, \mathcal{B}_0)$ and $\hat{\Delta}_k(\mathcal{A}_1, \mathcal{B}_0)$, and since $|\setone_0| = |\setone_1| = |\settwo_0| = |\settwo_1|$, we have 
$\text{PairAcc}(\mathcal{A} > \mathcal{B}|r)
=\text{PairAcc}(\mathcal{B} > \mathcal{A}|r)$, i.e., the pairwise ranking accuracy gap (as defined in Eq. \eqref{eq:pairwise_gap}) is $0$.
\end{proof}

\subsubsection{Distribution Normalization}
While the above procedure is provably guaranteed to achieve per-component fairness, under the definitions given previously, in practice we would want to use a regularization on the model for this goal, which will be noisier.  As such, we consider a lighter-weight approach: per-group normalization:
\begin{definition}{Per-Group Normalization.}
We modify component $f_k$ to incorporate per-group normalization by:
\begin{align}
    \bar{f}_k(x) = \frac{f_k(x)-\mu_{x \in \mathcal{G}}[f_k(x)]}{\sigma_{x \in \mathcal{G}} [f_k(x)]}, \ \ \mathcal{G}\in \{\setone, \settwo\},
\label{eq:pg_norm}
\end{align}
\end{definition}
where $\mu, \sigma$ is the empirical mean and standard deviation on $f_k(x)$, for $x\in\setone$ or $x\in\settwo$, respectively.
While $\bar{f}$ is not guaranteed to provide even per-component fairness under either definition, we find in practice it too can significantly improve end-to-end fairness.

\subsection{Counterfactual Testing}
\label{sec:counterfactual_testing}
How can we use the modified functions described above to understand the system's end-to-end fairness properties?   All of the questions given at the beginning of this section are counterfactual questions: what would happen if we succeeded in fixing a component or set of components?  With the above methods for simulating a fixed component (without actually changing the model training), we can do this headroom analysis.

\paragraph{Per-Component Effect}
As before, we assume we have $K$ components which are multiplied together such that the overall score given to an example by the system is $f(x) = \prod_{k = 0}^{K-1} f_k(x)$.
Even when improving the fairness of one component, it is not guaranteed to improve the fairness of the overall system. For example, two components could be equally biased in opposite directions such that improving only one actually worsens the end-to-end fairness metrics.  

Therefore, we use the above per-component modifications to test the effect of independently improving individual components.  We will use $g_\kappa$ to characterize a modified component as described above, i.e., $g_\kappa \in \{\tilde{f}_\kappa, \hat{f}_\kappa, \bar{f}_\kappa\}$.  With this we can simulate how the system would behave if we improve a given component $\kappa$:
\vspace{-0.05in}
\begin{definition}[$\kappa$-Improved System]
Given a system $f$ with $K$ components $f_k$, and a simulated improved component $g_\kappa$ for component $\kappa$, we define the improved system as:
\begin{align}
    f^{(\bar{\kappa})}(x) = g_\kappa(x) \prod\nolimits_{k \neq \kappa} f_k(x) = f(x) \frac{g_\kappa(x)}{f_\kappa(x)}.
    \label{eq:one_fix}
\end{align}
\end{definition}
With this we can understand the fairness of this counterfactual system $f^{(\bar\kappa)}$, using either Eq.~\eqref{eq:exposure_gap} or \eqref{eq:pairwise_gap}: if we improved the fairness of component $\kappa$, what would be the resulting end-to-end fairness?

\subsection{Modeling Solution}
The above analytical framework is a low-cost way to test if per-component fix leads to compositional fairness before we implement any real algorithmic changes.
In practice, after we have identified that a system's overall fairness can be improved by per-component fixes, as well as which components should be prioritized to fix, we need an algorithmic change over the existing per-component model.
Within each component, it is easy to see that the problem is a standard single-task optimization problem, where the objective can in general be written as $L_{ranking} + \lambda \cdot L_{fairness}$, where $L_{ranking}$ is the original ranking loss (e.g., loss over CTR predictions), and $L_{fairness}$ is a fairness loss which has many instantiations in existing works \cite{pmlr-v54-zafar17a, beutel2019putting, prost19}.
In the experiment section we will explore modeling solutions and show that the results are consistent with the solutions we provided through our analytical framework.

\section{Experiments}

\subsection{Synthetic Data}
We begin with presenting experiments on synthetic datasets to demonstrate the relationship between per-component fairness and compositional fairness, connecting to our theoretical analysis.
Again we assume the system has two components $f_0$ and $f_1$, and we evaluate the fairness metrics with respect to two groups $\mathcal{A}$ and $\mathcal{B}$.

\paragraph{Dataset with independent Gaussian distributions.}
Assume 
\begin{align*}
f_0(x) \sim \mathcal{N}(10, 0.5), x\in \mathcal{A};&\ \ \ f_0(x) \sim \mathcal{N}(9, 0.5), x\in \mathcal{B}.\\
f_1(x) \sim \mathcal{N}(5, 0.5), x\in \mathcal{A};&\ \ \ f_1(x) \sim \mathcal{N}(4, 0.5), x\in \mathcal{B}. 
\end{align*}
We draw $1000$ examples from each group, as shown in Figure~\ref{fig:syndata_plot} (left), where the x-axis is for $f_0(x)$ and y-axis for is $f_1(x)$.
The two colors represent the two groups, respectively.

\begin{figure}[t]
\centering
\hspace{-0.2in}
    \includegraphics[width=1.7in]{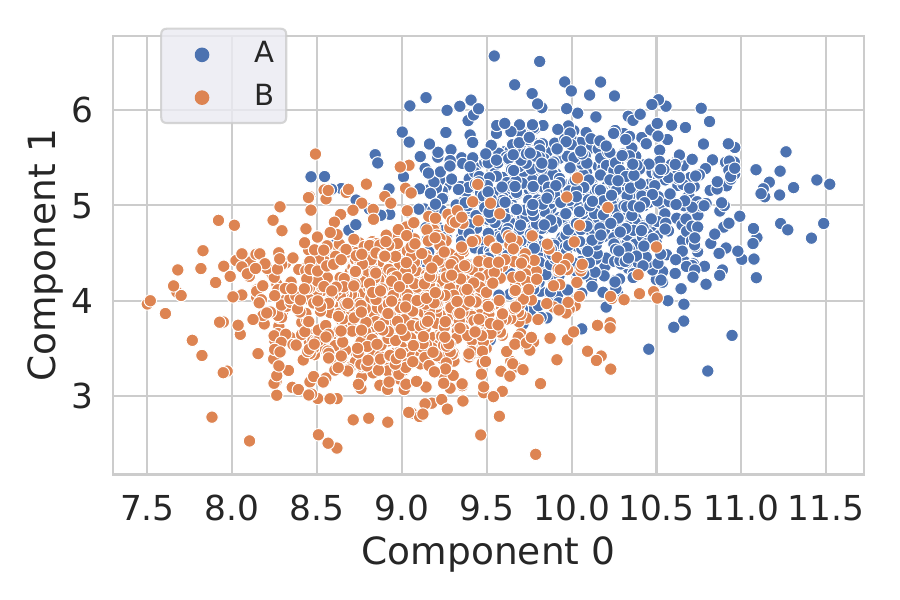}
    \includegraphics[width=1.7in]{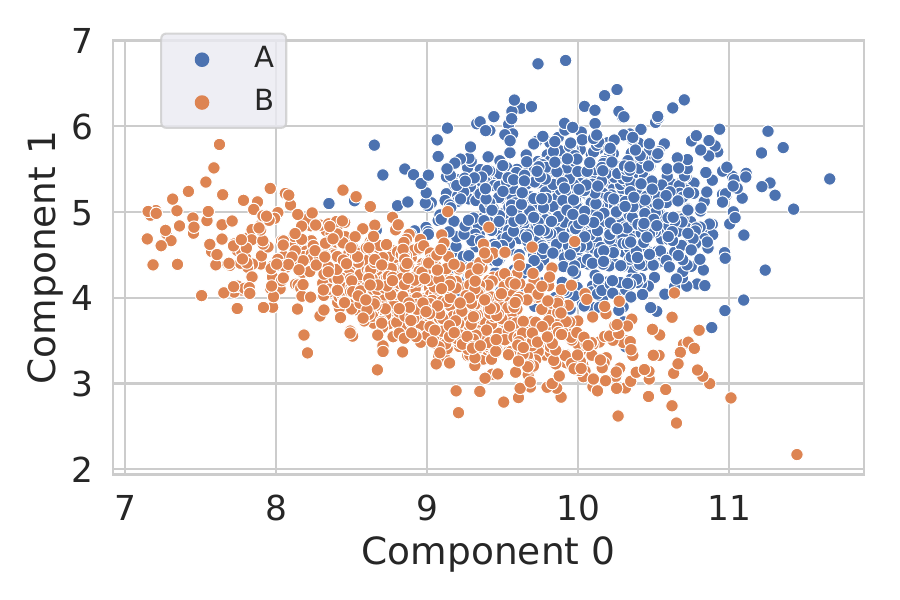}
    \hspace{-0.2in}
    \vspace{-0.15in}
    \caption{Data distribution for the two components on synthetic dataset 1 (left) and 2 (right).}
    \label{fig:syndata_plot}
\vspace{-0.2in}
\end{figure}

Table~\ref{table_syn_data} shows the \textit{ranking exposure} metric (as defined in Eq.~\eqref{eq:exposure_gap}, with $w=0.65$) on this synthetic dataset.  We see that Group $\mathcal{A}$ gets a significantly more exposure ($>50\%$ more) than Group $\mathcal{B}$.
We start by applying the fix on each component by \textit{distribution matching} (Eq.~\eqref{eq:dist_match}).
From Table~\ref{table_syn_data} we see that fixing only one component has very limited effect on overall system's fairness, and the end-to-end fairness can only be achieved by fixing both components (i.e., per-component fairness), which is consistent with Theorem~\ref{thm:exposure} since both distributions are symmetric and independent of each other.
Second, we test fixing each component by \textit{distribution normalization} (Eq.~\eqref{eq:pg_norm}), and it is also effective in reducing the gap between the two groups while fixing only one component at a time.

\begin{table*}
\small
\begin{center}
\begin{tabular}{|c|c|c|c||c|c|c|}
\hline
& \multicolumn{3}{c||}{Dataset 1} & \multicolumn{3}{c|}{Dataset 2} \\
\hline
 Fixed Component(s) & Group $\mathcal{A}$  & Group $\mathcal{B}$ & Overall Gap  & Group $\mathcal{A}$  & Group $\mathcal{B}$ & Overall Gap\\
\hline
None (baseline) & 0.7640 & 0.2360 & 0.5281  & 0.7699 & 0.2301 & 0.5398\\
\hline
\multicolumn{7}{|c|}{Distribution Matching}\\
\hline
Component 1 & 0.7433 & 0.2567 & 0.4865 & 0.7602 & 0.2398 & 0.5205\\
\hline
Component 2 & 0.6856 & 0.3144 & 0.3712 & 0.7318 & 0.2682 & 0.4636\\
\hline
Both & 0.4818 & 0.5182 & \textbf{-0.0365}\tablefootnote{The gap cannot be exactly 0 from discretization effects at the top of the list.} & 0.6262 & 0.3738 & \textbf{0.2524}\\
\hline
\multicolumn{7}{|c|}{Distribution Normalization}\\
\hline
Component 1 & 0.5472 & 0.4528 & 0.0943 & 0.6156 & 0.3844 & 0.2312\\
\hline
Component 2 & 0.5470 & 0.4530 & 0.0940 & 0.5765 & 0.4235 & \textbf{0.1529}\\
\hline
Both & 0.4858 & 0.5142 & \textbf{-0.0285} & 0.6950 & 0.3050 & 0.3899\\
\hline
\end{tabular}
\end{center}
\caption{The ranking exposure metric (over individual component and after composition) on Synthetic Dataset 1 and 2. Dataset 1 has independent distributions, and dataset 2 has anti-correlated distributions between components.}
\label{table_syn_data}
\vspace{-0.25in}
\end{table*}

\paragraph{Dataset with anti-correlated Gaussian distributions.}
In this experiment, we follow the same setting as the previous experiment except changing $f_0(x) = \mathcal{N}(13, 0.5) - f_1(x)$ for $x\in \mathcal{B}$ (we choose $\mu=13$ for the first Gaussian such that $\mu[f_0(x)]=9$, same as the first dataset) to create an anti-correlation between $f_0$ and $f_1$ for group $\mathcal{B}$.
Again $1000$ examples are sampled for each group, as shown in
Figure~\ref{fig:syndata_plot} (right). 
Table~\ref{table_syn_data} (right three columns) shows the fairness metrics on Synthetic Dataset 2. Compared with the results on Synthetic Dataset 1, we can see that the anti-correlation (thus breaking the symmetry requirement in Theorem~\ref{thm:exposure}) makes the end-to-end fairness metric much harder to achieve (larger overall gap).

\subsection{German Credit Data}
\begin{table}
\small
\begin{center}
\begin{tabular}{|c|c|c|c|}
\hline
Fixed Component(s) & Male Rep. & Female Rep. & Overall Gap\\
\hline
None (baseline) & 0.6081 & 0.3919 & 0.2162\\
\hline
credit amount  & 0.5852 &  0.4148 & 0.1704\\
\hline
age & 0.5865 & 0.4135 & 0.1731\\
\hline
num\_credits & 0.5986 & 0.4014 & 0.1972\\
\hline
num\_liable & 0.5953 & 0.4047 & 0.1907\\
\hline
credit amount \& age & 0.5652 & 0.4348 & 0.1304\\
% \hline
% \makecell{credit amount \& \\age \& num\_credits} & 0.5572 & 0.4428 & 0.1145\\
\hline
\makecell{credit amount \& \\age \& num\_liable} & 0.5392 & 0.4608 & 0.0783\\
\hline
All components & 0.5352 & 0.4648 & 0.0705\\
\hline
\end{tabular}
\end{center}
\caption{Compositional fairness by distribution matching for each component on the German Credit dataset.}
\label{table_german_data}
\vspace{-0.25in}
\end{table}

In this section, we demonstrate our analytical framework on a public academic dataset: the German Credit data\footnote{https://archive.ics.uci.edu/ml/datasets/statlog+(german+credit+data)}, as another example to illustrate the effect of score composition on the end-to-end fairness.
This dataset provides a set of attributes for each person, including credit history, credit amount, installment rate, personal status, gender, age, etc., and the corresponding credit risk.

We assume the final score for assessing credit risk is composed by the following four factors:
1) credit amount; 2) age; 3) number of existing credits at this bank (denoted as ``num\_credits" in the following), 4) number of people being liable to provide maintenance for (denoted as ``num\_liable").
We consider the problem of ranking all people in this dataset by the above score composition, and we consider the end-to-end fairness metric to be the ranking exposure (Eq.~\eqref{eq:exposure_gap}, $w=0.65$) with respect to \textit{gender} groups: male, female\footnote{As that is how gender is categorized in the dataset.}.

In the first setting, we assume the same group size, i.e., $|\mathcal{A}|=|\mathcal{B}|$, and in order to achieve demographic parity, the top $N$ people within each gender group should receive the same ranking exposure.
In this case the ideal exposure gap should reach zero.
In Table~\ref{table_german_data}, we show the effect on the end-to-end fairness, in terms of the percentage of male and female representation in the end ranking, and the exposure gap between them.
We use \textit{distribution matching} (Eq.~\eqref{eq:dist_match}) to improve the system, and we use the counterfactual testing (Section~\ref{sec:counterfactual_testing}) to test the effect of fixing each component alone, and the effect of fixing different combinations of the components.

In Table~\ref{table_german_data} we see that distribution matching for each component independently can help on the compositional fairness (smaller ``Overall Gap") to different extents.
Fixing multiple components simultaneously effectively further improves compositional fairness, and the overall gap is reduced most when all components are fixed.
% Based on this counterfactual testing we see that among all factors, fixing ``credit amount" can better help reduce the overall gap, 
This headroom analysis provides us guidance on which components should be prioritized for improving end-to-end fairness.

\begin{figure}[h]
    \centering
    \hspace{-0.2in}
    \includegraphics[width=1.8in]{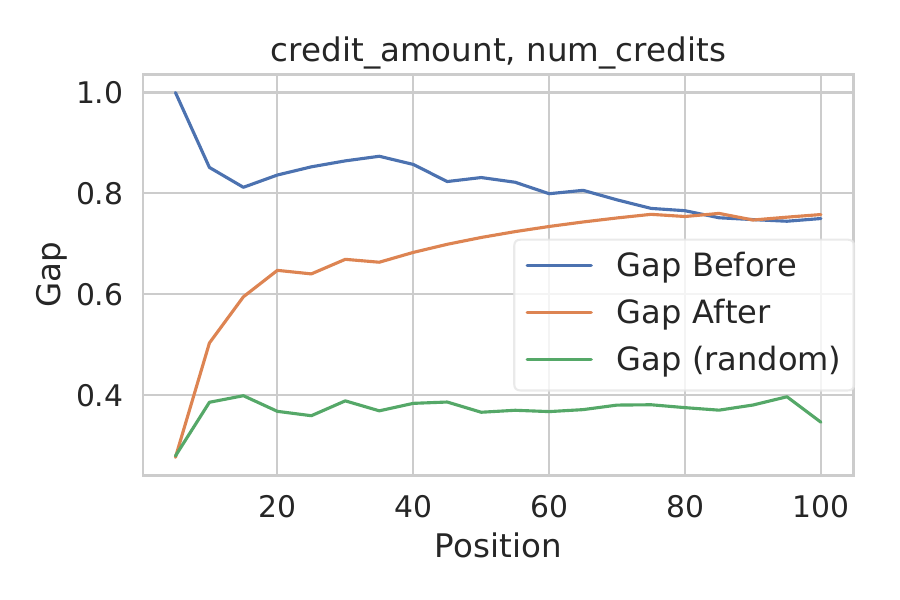}
    \hspace{-0.2in}
    \includegraphics[width=1.8in]{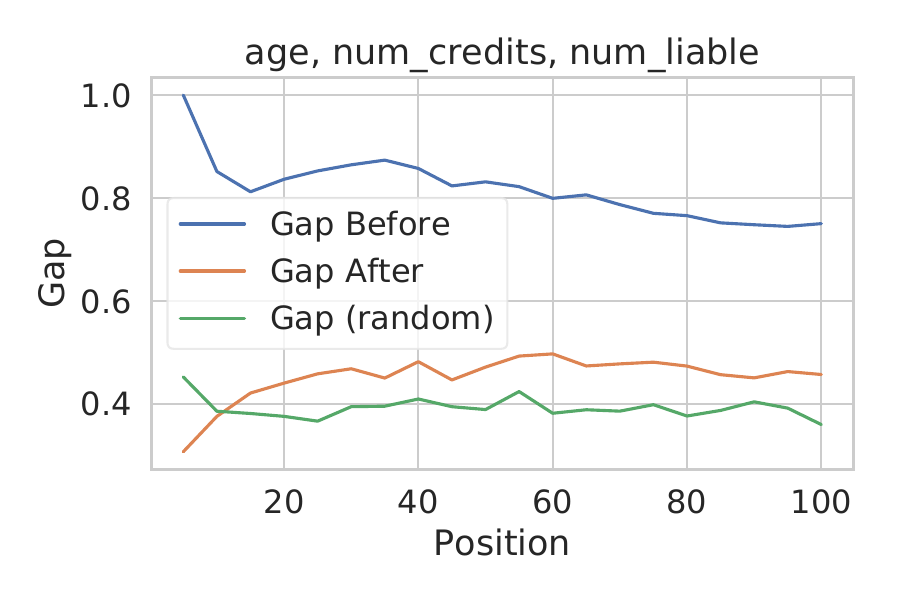}
    \hspace{-0.2in}
\vspace{-0.2in}
    \caption{Gap between gender groups with respect to each position, with different group sizes, by distribution normalization on the German Credit dataset.}
    \label{fig:multi_component_credit}
\vspace{-0.15in}
\end{figure}

\begin{table*}[h]
\begin{center}
\small
\begin{tabular}{|c|c|c|c|c|c|c|}
\hline
Fixed component(s) & Gap (CTR) & Gap (S1) & Gap (S2) & Group $\mathcal{A}$ Acc. & Group $\mathcal{B}$ Acc. & Overall Gap\\
\hline
None (baseline) & 0.1024 & 0.1781 & 0.1078 & 0.5862 & 0.7408 & 0.1546\\
\hline
\multicolumn{7}{|c|}{Matching on Marginal Distributions}\\
\hline
CTR & \textbf{0.0049} & 0.1781 & 0.1078 & 0.6198 & 0.7084 & 0.0886\\
\hline
Satisfaction 1 & 0.1024 & \textbf{0.0103} & 0.1078 & 0.6292 & 0.7054 & 0.0762\\
\hline
Satisfaction 2 & 0.1024 & 0.1781 & \textbf{0.0202} & 0.6021 & 0.7270 & 0.1248\\
\hline
All & 0.0049 & 0.0103 & 0.0202 & 0.6781 & 0.6546 & 0.0236\\
\hline
\multicolumn{7}{|c|}{Matching on Conditional Distributions}\\
\hline
CTR & \textbf{0.0057} & 0.1781 & 0.1078 & 0.6164 & 0.7048 & 0.0884\\
\hline
Satisfaction 1 & 0.1024 & \textbf{0.0092} & 0.1078 & 0.6258 & 0.7039 & 0.0781\\
\hline
Satisfaction 2 & 0.1024 & 0.1781 & \textbf{0.0197} & 0.6003 & 0.7258 & 0.1255\\
\hline
All & 0.0057 & 0.0092 & 0.0197 & 0.6697 & 0.6472 & 0.0225\\
\hline
\multicolumn{7}{|c|}{Matching on Delta Distributions}\\
\hline
CTR & \textbf{0.0000} & 0.1781 & 0.1078 & 0.6473 & 0.7408 & 0.0935\\
\hline
Satisfaction 1 & 0.1024 & \textbf{0.0000} & 0.1078 & 0.6669 & 0.7408 & 0.0739\\
\hline
Satisfaction 2 & 0.1024 & 0.1781 & \textbf{0.0000} & 0.6197 & 0.7408 & 0.1211\\
\hline
All & 0.0000 & 0.0000 & 0.0000 & 0.7630 & 0.7408 & \textbf{0.0222}\\
\hline
\end{tabular}
\end{center}
\caption{Compositional fairness by distribution matching in each component, on a large-scale real-world recommender system.}
\label{table_prod_system}
\vspace{-0.25in}
\end{table*}

In the second setting, we do not assume the same group size, and we rank $\mathcal{A}$ and $\mathcal{B}$ proportional to their respective sizes ($|\mathcal{A}|=690$ for male, $|\mathcal{B}|=310$ for female on this dataset). We vary the number of top positions $t$ and plot the exposure gap metric with respect to the positions. 
As a reference, we also plot the exposure gap under random ordering (denoted as ``Gap (random)" in the figures, by averaging over $100$ runs), which ranks each person regardless of their gender.
In Figure~\ref{fig:multi_component_credit}, we show the results by \textit{distribution normalization} on multiple components simultaneously.
The title of each sub-figure indicates the components that we have applied distribution normalization on.
We can see that under this more realistic setting, our proposed fixes over per-component (and over combinations of components) can still effectively lead to significant improvements on the end-to-end fairness metric.

Empirically the results indicate that fairness does compose approximately on this dataset, we further analyzed the data distribution and found that most of the component scores (and the sum of many component combinations) are Gaussian-distributed (and thus symmetric), hence following Theorem~\ref{thm:exposure} compositional fairness holds relatively well on this dataset.

\subsection{Case Study on A Real Production Recommender System}
In this section, we describe the results on a large-scale real-world recommender system.
On an abstract level, the system mainly consists of three different components, one predicting the probability of click (denoted as ``CTR''), and two other components predicting different signals of user satisfaction, denoted as ``Satisfaction 1 (S1)'' and ``Satisfaction 2 (S2)'', similar to \cite{DBLP:conf/kdd/BeutelCDQWWHZHC19}.
In the following, we present results by fixing each individual component by:
\begin{itemize}
    \item Matching the marginal distribution of $f_k(x), x\in\mathcal{A}$ and $f_k(x), x\in\mathcal{B}$, as in Eq. \eqref{eq:dist_match}. 
    \item Matching the conditional distributions of $X^k_{A_0}$ and $X^k_{B_0}$, and the conditional distributions of $X^k_{A_1}$ and $X^k_{B_1}$, building on \eqref{eq:dist_match}. 
    \item Matching the distribution on the delta terms: $\Delta_k(\mathcal{A}_1, \mathcal{B}_0)$ and $\Delta_k(\mathcal{B}_1, \mathcal{A}_0)$, as in Theorem~\ref{thm:pairwise_gap}.
    % \item As a reference, to test the fairness on the extreme end, we set a constant value $(p, 1-p)$ for all the (clicked, un-clicked) pairs from each component. This experiment is to explore what other conditions might help end-to-end fairness when we have per-component fairness.
\end{itemize}
The results are shown in Table~\ref{table_prod_system}, the first column denotes the ``fixed" component(s), and column 2-4 show the Pairwise Ranking Accuracy Gap (Eq.~\eqref{eq:pairwise_gap}) for each component (abbreviated to ``CTR", ``S1", ``S2"), respectively. Column 5-7 show the overall (compositional) Pairwise Ranking Accuracy for Group $\mathcal{A}$ and $\mathcal{B}$ and the overall Pairwise Ranking Accuracy Gap.
We see that first, compared to matching on marginal/conditional distributions, matching on the delta distributions is the only method that achieves zero gap on the per-component gap metric (Column 2-4 in Table~\ref{table_prod_system}). This is consistent with our theory (Theorem~\ref{thm:pairwise_gap}).
Second, although marginal/conditional distribution matching does not provably ensure per-component fairness, empirically they still lead to a significant gap reduction (all close to zero), and effectively help on the compositional fairness (Column ``Overall Gap" in Table~\ref{table_prod_system}).
Finally, compositional fairness is better achieved when all the components are fixed, and fixing per-component alone only helps to a certain extent on the compositional fairness. 

Analysis over the component scores shows that in this system, most components are correlated with each other, hence it is easier for the conditions in Theorem~\ref{thm:pairwise} to hold. This is consistent with the results in Table~\ref{table_prod_system} where we see that fairness composes well after making each component independently fair.

\subsection{Improving the Real Recommender}
Last, we test if applying model regularization approaches to each component successfully improves the fairness of the overall recommender system.
We implemented the fairness loss similar to \cite{prost19}, since it is most similar to our offline label-conditioned distribution matching analysis (Section~\ref{sec:label-conditioned-dm}).
Specifically, we minimize the Maximum Mean Discrepancy (MMD) \cite{mmd} between the two delta distributions: $\Delta_k(\mathcal{A}_1, \mathcal{B}_0)$ and $\Delta_k(\mathcal{B}_1, \mathcal{A}_0)$.
The results are shown in Table~\ref{tab:modeling}\footnote{Note, as these results are from continuous training, the dataset and thus absolute numbers differ slightly from the previous analysis.}. 
As we see there, applying the MMD regularization to any individual component leaves a significant pairwise accuracy gap, but by applying it to all of the components (still independently to each one) significantly reduces the pairwise ranking accuracy gap.
This is highly encouraging in that it demonstrates that significant progress can be made on fairness even in multi-component recommender systems and suggests that the insights gained from our counterfactual testing could be very useful in deciding when per-component fixes should be applied.
\begin{table}[]
\setlength\tabcolsep{3pt}
    \centering
    \begin{tabular}{c|c|c|c|c}
        Fixed components    & CTR & S1 &  S2 & All\\
                        \hline
        Distribution matching via MMD \cite{prost19} & 0.080 & 0.088 & 0.067 & 0.013   \\
    \end{tabular}
    \caption{Pairwise ranking accuracy gap on the real-world recommender system from modeling-based approaches (label-conditioned MMD regularization). }
    \label{tab:modeling}
\vspace{-0.3in}
\end{table}

\section{Conclusion}

As most industrial recommender systems are composed of many models and tasks, understanding how and when fairness composes is crucially important to enabling the application of fairness principles in practice.
In this paper, we explore, both theoretically and empirically, the question: given a recommender system where the end ranking score is the product of scores from each component, does making each component fair independently improve the full system's fairness?
We formalize this problem in two recently proposed fairness metrics for ranking, \textit{fairness of exposure}, and \textit{pairwise ranking accuracy gap}. 
For both metrics we find the unfortunate challenge that recommender fairness is not guaranteed to compose, aligned with prior work in classifiers \cite{dwork2018fairness}.

To overcome this obstacle, we focus on studying when \emph{does} fairness compose.
We first present theory showing conditions under which per-component fairness does compose.
Because the theory shows that composition is distribution-dependent, we propose taking a data-driven approach to this problem.  We offer an analytical framework for measuring how much improving per-component fairness will improve the recommender system's fairness and diagnosing which components we should prioritize improving to have the most impact.
By applying our analytical framework to multiple datasets, including a large real-world recommender system, we find that in practice most of the end-to-end exposure or accuracy gaps can be addressed through independently applying  per-component improvements.

Our results highlight that while guarantees do not hold in the worst-case, there is more nuance over realistic data distributions.  We believe this framework is a strong foundation for future research on compositional fairness, with clear extensions for classification \cite{kallus2019fairness,borkan2019nuanced,narasimhan2019pairwise} and opportunities to generalize to more fairness metrics and compositional functional forms.

%%
%% The next two lines define the bibliography style to be used, and
%% the bibliography file.
\bibliographystyle{ACM-Reference-Format}
\bibliography{main}

\end{document}